\documentclass[sigconf,nonacm]{acmart}
\setcopyright{none}
\renewcommand\footnotetextcopyrightpermission[1]{}
\usepackage{balance}
\usepackage{cleveref}
\usepackage{listings}
\usepackage{enumitem}
\usepackage{algorithm}
\usepackage[noend]{algpseudocode}
\usepackage{multirow}     % for \multirow
\usepackage{array}        % extended table column definitions
\usepackage{rotating}     % for \rotatebox
\usepackage{graphicx}
\usepackage[dvipsnames]{xcolor}
\usepackage{amsthm}
\usepackage{needspace}
\definecolor{darkgreen}{RGB}{0,50,0}
\definecolor{darkorange}{RGB}{235,140,0}

\AtBeginDocument{%
  }

\begin{document}

\title{Generating Intervention Hypotheses using Explainable Explanations on Graphs: G2I, a Two-Stage Greedy Framework}

\author{Mulin Tian}
\affiliation{%
  \institution{University of Southern California}
  \city{Los Angeles}
  \state{California}
  \country{USA}
}
\email{mulintia@usc.edu}

\author{Ajitesh Srivastava}
\affiliation{%
  \institution{Northeastern University}
  \city{Charlotte}
  \state{North Carolina}
  \country{USA}
}
\email{aji.srivastava@northeastern.edu}

%% Short author list for page headers.
\renewcommand{\shortauthors}{Mulin Tian and Ajitesh Srivastava}

%% Optional: uncomment to record the accepted venue on the preprint.
%% (The published version is CC BY, so this is permitted.)
% \thanks{Accepted at the 35th ACM International Conference on Information
% and Knowledge Management (CIKM '26), November 7--11, 2026, Rome, Italy.
% \url{https://doi.org/10.1145/3799682.3840645}}

\begin{abstract}
Real-world decision-making in public health and social science can greatly benefit from predictive models, yet translating predictions into effective interventions requires explaining the model behavior. While Graph Neural Networks (GNNs) are well-suited for modeling relational data, existing explanation methods largely operate at the node level and fall short of supporting actionable, network-level intervention design. Existing counterfactual GNN explainers, such as CF-GNNExplainer and CF$^2$, rely on continuous mask optimization over features and edges, which implicitly assume feasible edge manipulation, may allocate effort to immutable or non-actionable attributes, and incur substantial computational overhead. Further, the method of arriving at the explanation itself is difficult to explain to a domain specialist who is not an AI expert. Can simple methods generate good explanations? To explore this, we reframe counterfactual explanation as an intervention design problem. At the local level, we generate counterfactuals via a greedy search that directly identifies minimal, actionable changes to node features and neighbor-level conditions. We derive 
conditions under which the greedy search provides guarantees, and empirically show that these conditions are approximately met.
These counterfactuals are converted into interpretable rules suitable for real-world intervention. At the network level, we formulate intervention selection as a Disjunctive Normal Form (DNF) coverage problem under a budget constraint, which is nondecreasing and approximately submodular, enabling a greedy algorithm with theoretical guarantees. Experiments on synthetic graphs and real-world suicide risk networks demonstrate that our approach produces scalable, cost-effective intervention strategies with significantly improved efficiency over mask-based counterfactual methods.

% \keywords{First keyword  \and Second keyword \and Another keyword.}
\end{abstract}

\keywords{Graph Neural Networks, Explainability, Intervention Generation.}

\maketitle

\section{Introduction}

%Suicide remains a pressing public health concern in underserved and high-risk populations. A wide range of machine learning models have been applied to identify individuals at risk, leveraging increasingly rich behavioral and social data. However, a key limitation of many predictive models lies in the interpretability and actionability of their outputs. In suicide prevention, accurately identifying at-risk individuals is only the first step; effective intervention requires understanding \emph{why} an individual is at risk and \emph{how} that risk can be mitigated.
In social and health sciences, interventions are performed to improve individual outcomes. These include identifying who could benefit from what kind of intervention.
For instance, to reduce the risk of suicide, the intervention may be designed to target those who could be at a higher risk based on certain historical, personal, or social attributes~\cite{a2019_christakis,jackson_2008_social}. 
Yet, identifying which individuals are at risk, and critically, why they are at risk, remains a significant challenge. Graph Neural Networks (GNNs)~\cite{wu2021comprehensive} have emerged as powerful tools for modeling social networks, as they can effectively encode both node-level attributes and the relational structure between individuals. This capability makes GNNs particularly suitable for predicting processes, where an individual's outcomes are shaped not only by personal characteristics but also by their social connections. However, beyond good accuracy, an explanation is needed to understand the model behavior.

Explanations as hypotheses: Several prior works, including GNNExplainer~\cite{ying2019gnnexplainer}, CF-GNNExplainer~\cite{cfgnnexplainer}, CF$^2$-GNNExplainer~\cite{cffgnn_explainer}, PGExplainer~\cite{PGExplainer}, SubgraphX~\cite{SubgraphX}, and RCExplainer~\cite{RCExplainer}, explain node-level predictions by identifying the node features and edges that are most influential to a target node. In particular, they can provide counterfactual explanations~\cite{wachter2018counterfactualexplanationsopeningblack,cfalgorithm} -- what minimal change in data can lead to a change in classification. These counterfactual explanations can be seen as `intervention hypotheses', i.e., if we are able to change certain attributes, it will change outcomes. Recent work in Economics has suggested the idea of using machine learning counterfactual explanations as a tool for hypothesis generation.~\cite{ludwig2024machine}. However, realizing this in real-world relational data for interventions is challenging due to the following reasons. (1) Existing approaches  provide node-level explanations and it is not clear how to convert it into population-level intervention. (2) Existing methods may not directly support explanations that lead to actionable and cost-effective prevention strategies. For example, an explanation for a node may suggest deleting an edge, which may not be possible or ethical in real networks, and does not provide population-level insights. (3) Most importantly, \textit{the explanation methods themselves are difficult to explain} to a non-ML expert. We argue that in cross disciplinary collaborations, the method for generating explanation of a ``black-box'' model should not itself be a black-box and instead should be easy to explain to a non-ML expert.
%In particular, mask-based explanations identify what is important for a prediction but fail to produce intervention strategies that are truly %actionable and cost-effective. 
%In real-world prevention settings, both node-level attributes and social ties are subject to practical, ethical, and logistical constraints, which these explanations do not explicitly account for, limiting their usefulness for designing feasible interventions. 

To address these limitations, we propose to use forms of greedy algorithms at two levels (G2I -- Graph to Intervention). At the ``local''-level, we perturb both structural and feature components of a network, greedily building a subset of features (for individual and their neighborhood) to maximize the shift in outcomes for node classification. At ``global''-level, we construct an intervention-design problem. Given the local counterfactual explanations, how to design an intervention targeting a subset of features (individual and neighborhood-level) to maximize improved outcomes. 
Our G2I framework consistently outperforms prior explanation methods, including CF~\cite{cfgnnexplainer} and CF$^2$~\cite{cffgnn_explainer}, in both the first-stage explanation evaluation and the second-stage intervention generation tasks. Moreover, G2I achieves up to two-orders of magnitude speedup compared to these methods. We summarize the novelty and benefits of our method compared to prior works below.

\noindent
\textbf{Explainability with Actionability:}
Our framework provides counterfactual explanations that explicitly characterize how changes to node attributes and social connections would alter a node’s predicted risk, moving beyond static importance scores toward explanations that reflect causal and contrastive reasoning.
By restricting feasible perturbations to features and neighborhoods that align with real-world intervention constraints, we enable practitioners to understand and design and test interventions.

\noindent
\textbf{Generalizable:}
Due to the greedy approach, our method does not require dataset-specific hyperparameter tuning and maintains stable performance across different datasets and structural settings.

\noindent
\textbf{Scalability:}
Compared to traditional mask-based optimization methods, our approach scales to significantly larger graphs under the same hardware constraints, as it only requires storing the original graph and a pre-trained GNN model, while prior methods require substantial memory overhead by maintaining mask variables of comparable size. This advantage is critical for population-level intervention design on large-scale social networks.

\section{Background and Related Work}

\noindent\textbf{{GNN Explanation Methods.}}
GNNExplainer\cite{ying2019gnnexplainer} learns a soft mask over edges and node features to identify a subgraph that is sufficient to preserve the original GNN prediction. However, it does not model necessity, as it does not test whether removing the identified components would change the outcome. Consequently, the explanations are descriptive but not counterfactual or actionable. CF-GNNExplainer\cite{cfgnnexplainer} generates counterfactual explanations by identifying minimal edge deletions that flip a node’s prediction, explicitly capturing necessity. However, it does not enforce sufficiency, since the remaining subgraph is not required to reproduce the original prediction. The method is further limited to structural perturbations and ignores feature–structure interactions. \textbf{CF$^2$}\cite{cffgnn_explainer} explicitly incorporates both sufficiency and necessity by jointly optimizing factual and counterfactual objectives. It ensures that the selected subgraph is sufficient to maintain the prediction while also being necessary such that its removal changes the outcome. This joint formulation yields explanations that are more balanced and complete than methods focusing on only one criterion.

Despite these advances, all three methods rely on mask-based optimization procedures, which introduce an additional step of black-box training. The explanation is produced as the outcome of optimizing continuous mask variables under task-specific loss functions, rather than through transparent, interpretable reasoning steps. Consequently, the explanation process itself remains unclear and difficult to justify to practitioners outside Computer Science, which may limit its usability in real-world decision-making settings.

\noindent\textbf{{Social Interventions.}}
Peer-based interventions leverage social connections to enable individuals within a community to influence and support one another, and have been widely used to promote positive outcomes such as HIV prevention, substance use reduction~\cite{valente2007peer}, and mental health support~\cite{davidson2006peer}. These strategies rely on trust and shared lived experiences to facilitate meaningful behavioral change.
This approach is crucial in underserved communities due to the mistrust in formal institutions like healthcare systems, social services and law enforcement. This skepticism often stems from negative past interactions within these systems. \cite{fest2013street,dworsky2009homelessness}. As a result, individuals are more likely to seek support from peers rather than from traditional providers.
To support these peer-driven methods, predictive models must go beyond classification to offer clear, interpretable insights. Providing better understanding to \textit{why} someone is at risk is vital in providing timely, actionable community-rooted interventions.

% {\color{red} Needs expansion: Give some examples of interventions -- we want to highlight that

% (1) understanding the network's role is necessary [otherwise, it is not clear why we are talking specifically about peer-based intervention and nto just regular intervention]

% (2) typical interventions are based on simple hypothesis: if X, then Y. 
% Limited by human bias on what they can see in the data or know from prior knowledge
% But population is heterogeneous ... ideal intervention may be complex per person and different for each person ... find some citations for this}

Real-world interventions designed to address population-level challenges are widely used, such as controlling the spread of infectious diseases\cite{Pastor_Satorras_2015} or mitigating misinformation diffusion\cite{vosoughi_2018_the} in social systems. These problems are inherently collective, as human behavior and decision-making are strongly influenced by interactions with others in a social network\cite{a2019_christakis,jackson_2008_social}. Therefore, when designing interventions at the population level, it is necessary to explicitly account for the underlying network structure rather than treating individuals as independent.

In practice, intervention strategies are often based on simple heuristic rules of the form ``if X, then Y,'' such as targeting individuals with certain risk factors for specific treatments or policies. However, such approaches are limited by human bias~\cite{tversky1974judgment} and partial knowledge, as they rely on what practitioners can infer from available data or prior experience\cite{ludwig2024machine}.
At the same time, real-world populations are highly heterogeneous~\cite{Athey_2016}, and the factors driving outcomes vary across individuals and their social contexts. As a result, effective intervention strategies need to be personalized and context-dependent, rather than relying on a single uniform rule applied to all individuals.

\noindent\textbf{{Explanations as Hypothesis Generators.}}
Recent work notes the role of machine learning as a tool for hypothesis generation rather than just prediction \cite{ludwig2024machine,mullainathan2017machine}. In particular, they argue that data-driven models can uncover patterns that may guide the design of interventions, especially in complex social systems where traditional theory-driven approaches are limited~\cite{lazer2009computational,lazer2020computational}. 

However, generating actionable hypotheses from model outputs remains challenging. In practice, hypothesis generation is often informal and relies heavily on human interpretation, making it difficult to systematically identify interventions that are both effective and scalable. Moreover, the space of possible interventions is combinatorial, especially in networked settings, where both individual attributes and social interactions jointly influence outcomes.
Building on these observations, a natural direction is to formalize hypothesis generation as an optimization problem. Given a limited intervention budget, one would seek to construct intervention strategies that maximize population-level impact while satisfying practical requirements such as interpretability, actionability, and computational efficiency. However, existing work has not provided a unified framework that simultaneously addresses these aspects in graph-based intervention design.
% In this work, we take a step toward formalizing hypothesis generation as an optimization problem. Given a limited intervention budget, our goal is to efficiently construct intervention strategies that maximize population-level impact while satisfying key practical requirements, including interpretability, actionability, and computational efficiency. To the best of our knowledge, prior work has not provided a unified framework that simultaneously addresses these aspects in graph-based intervention design.

\begin{figure*}[t]
    \centering
    \includegraphics[width=0.98\textwidth]{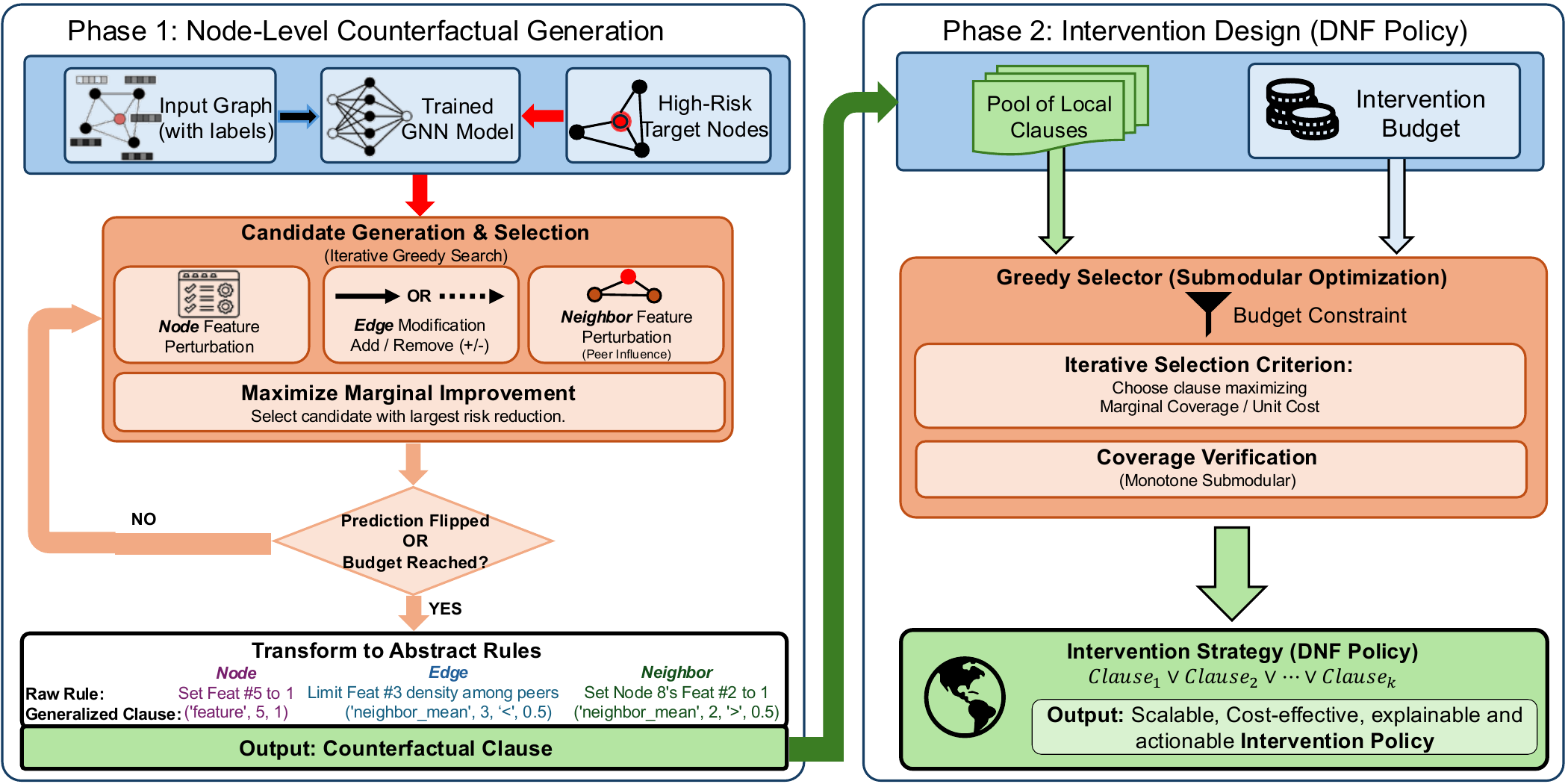}
    \caption{
            Two-phase framework: Phase 1 generates node-level counterfactual explanations via iterative greedy search over node features, edge modifications, and neighbor feature perturbations. Phase 2 aggregates local clauses into a global intervention policy using submodular optimization under a budget constraint.
    }
    \Description{Two-phase pipeline diagram. Phase 1 takes a graph and a trained GNN and runs an iterative greedy search over node features, edge modifications, and neighbor feature perturbations to produce a per-node counterfactual clause. Phase 2 aggregates these clauses into a global DNF intervention policy by greedy submodular maximization of node coverage under a budget constraint.}
    \label{fig:framework}
\end{figure*}

\section{Method and Approximation Guarantees}
\label{sec:method}

The proposed framework for intervention design on graph-structured data consists of three primary components: (1) a graph-based predictive model, (2) a node-level counterfactual search to identify node-specific interventions, and (3) a group-level optimization phase to select a robust set of intervention policies.

\subsection{Predictive Backbone}

We employ a Graph Convolutional Network\cite{GCN}(GCN) as the predictive model for node-level risk estimation. To address high-dimensional input features, we apply a fixed Principal Component Analysis (PCA)~\cite{jolliffe2016pca} projection prior to graph convolution, reducing feature dimensionality while preserving interpretability. The PCA projection matrix is pre-computed and remains non-trainable, ensuring a deterministic mapping between the original feature space and the reduced embedding space for consistent counterfactual analysis.

The GCN layers propagate information through standard neighborhood aggregation with symmetric normalization. Our framework is model-agnostic and can be readily extended to other GNN architectures such as GraphSAGE~\cite{graphsage} or GAT~\cite{GAT}.

\begin{algorithm}[t]
\caption{Node-level Counterfactual Clause Generation}
\label{alg:node_cf}
\begin{algorithmic}[1]
  \Require Graph $G$, target node $v$, GNN model $f$, maximum number of interventions $K$, edge intervention mode \texttt{edge\_mode}
  \Ensure Node-level counterfactual clause $c_v$
  
  \State Initialize $c_v \leftarrow \emptyset$, current graph $G_{\text{mod}} \leftarrow G$
  
  \While{$|c_v| < K$ \textbf{and} $f(v, G_{\text{mod}}) > 0.5$}
    \State Identify candidate interventions $I$ based on \texttt{edge\_mode}
    \State Select
    \[
      i^* \leftarrow \arg\max_{i \in I} \Big( f(v, G_{\text{mod}}) - f\big(v, G_{\text{mod}} \oplus \{i\}\big) \Big)
    \]
    \If{marginal improvement $> 0$}
      \State $c_v \leftarrow c_v \cup \{i^*\}$
      \State $G_{\text{mod}} \leftarrow G_{\text{mod}} \oplus \{i^*\}$
    \Else
      \State \textbf{break}
    \EndIf
  \EndWhile
  
  \State Convert neighbor-specific interventions in $c_v$ into threshold-based neighbor conditions
  \State \Return $c_v$
\end{algorithmic}
\end{algorithm}

\subsection{Node-level Counterfactuals}
\subsubsection{Neighborhood Feature Inclusion}

Unlike traditional counterfactual methods that focus solely on the features of the target node, our approach explicitly incorporates the graph topology into the intervention space. The scope of allowable interventions is controlled by an \texttt{edge\_mode} parameter, which defines how structural and neighborhood information is utilized:

\noindent\textbf{Feature Only (\texttt{none}).}
Interventions are restricted to perturbations of the intrinsic features of the target node $v$.

% \textbf{Edge Modification (\texttt{edges}).}
% The algorithm considers adding or removing edges between $v$ and other nodes in the graph. These topological modifications are dynamically evaluated and subsequently converted into neighborhood-level feature distribution constraints.

\noindent\textbf{Edge Modification (\texttt{edges}).} To ensure real-world feasibility, our algorithm avoids treating structural modifications as literal relationship assignments (e.g., forcing a friendship). Instead, potential edge additions or removals are dynamically evaluated and converted into actionable neighborhood-level feature distribution constraints. Specifically, whenever an edge edit significantly shifts the target node's 1-hop neighborhood feature distribution, it is abstracted into a continuous threshold rule (e.g., requiring the mean of a supportive feature $j$ among peers to exceed $\theta$). By translating deterministic edge edits into generalized feature-density constraints, the resulting interventions represent realistic public health programs and establish a rigorous, measurable cost model.

\noindent\textbf{Neighbor Feature Perturbation (\texttt{edge\_features}).}
The algorithm allows modifications to the features of the immediate neighbors of $v$, enabling the modeling of peer effects on the predicted risk of $v$. Similar to the Edge Modification approach, the rule derived from such changes focuses on changes in mean of that feature in the node's neighborhood.

\subsubsection{Generating Local Clauses}
We employ an iterative greedy search procedure (Algorithm~\ref{alg:node_cf}) to construct local counterfactual explanations. At each iteration, the algorithm selects the intervention—either a feature perturbation, an edge modification, or a neighbor feature change—that maximizes the reduction in the predicted probability for the target node.

Crucially, once a set of topological or neighborhood-level interventions is identified, these changes are transformed into \emph{Neighbor Conditions} to ensure that the resulting explanation remains both interpretable and generalizable. Specifically, when an intervention alters the neighborhood of node $v$ (e.g., through edge additions/removals or neighbor feature perturbations), we compute the induced change in the mean feature distribution of the one-hop neighborhood. This change is expressed as a threshold-based condition:
\begin{equation}
N(\text{feature}_j) \in [\text{op}, \text{threshold}],
\end{equation}
where $N(\cdot)$ denotes the mean over the one-hop neighborhood, and $\text{op} \in \{\geq, \leq\}$ specifies the direction of the constraint. 
For example, adding a peer with high mindfulness to the neighborhood of $v$ is abstracted as the mean value of a mindfulness feature within the neighborhood exceeds a predefined threshold. %By converting node-specific interventions into such neighborhood-level conditions, the resulting local policy can be meaningfully applied to and evaluated on other nodes in the graph.

\subsection{Approximation Guarantees}

% {\color{red} 
% This section is very poorly written. Please see inline comments. You should tell a clear story:

% - we used a greedy algorithm, but why would it work?

% - if the objective happens to be submodular (even approximately) then greedy provides guarantees (well-known results ...show citations)

% - if the objective is approximately additive then? We have theorem 3.1 to show greedy still provides guarantees.

% - So does the objective satisfy these properties? We do an empirical analysis

% - You must relate what you observe to the theory. 

% [If you do not undertsand this 100\%, reviewer is not going to understand or believe it]

% }

\begin{figure}[t]
    \centering
    \includegraphics[width=0.5\textwidth]{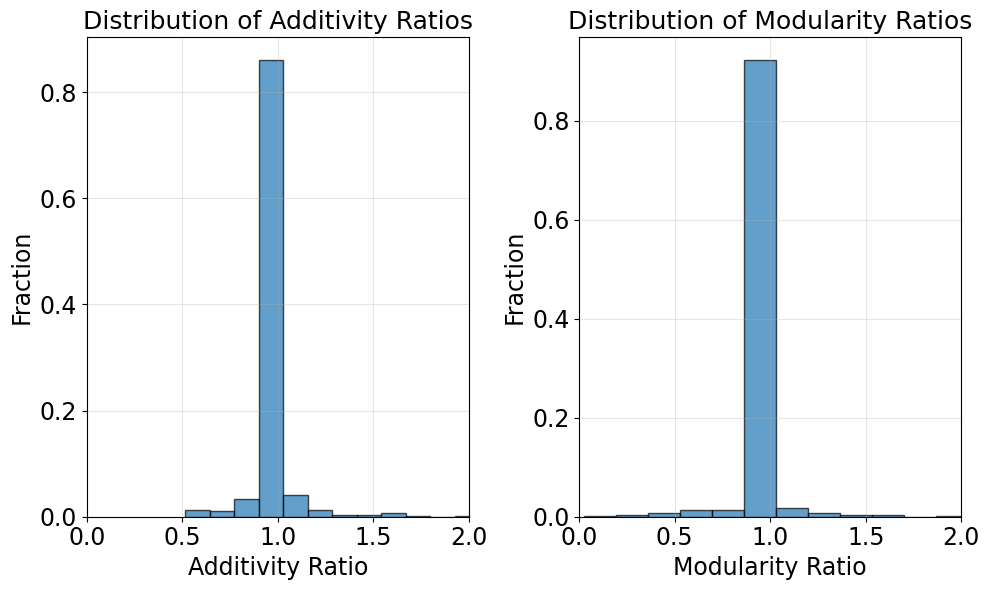}
    \caption{Empirical validation of relaxed additivity and approximate submodularity assumptions. 
    The distributions of additivity ratios (left) and modularity ratios (right) are tightly concentrated around 1, suggesting that the objective function closely satisfies near-additive and near-modularity properties, thereby supporting the theoretical guarantees.}
    \Description{Two histograms. The left panel shows the distribution of additivity ratios and the right panel the distribution of modularity ratios, measured over 5000 randomly sampled pairs of disjoint intervention sets. Both distributions are sharply peaked near a value of one, indicating that the objective is close to additive and exhibits diminishing returns.}
    \label{fig:guarantee}
\end{figure}

The node-level counterfactual search (Algorithm~\ref{alg:node_cf}) relies on a greedy strategy that selects, at each step, the single intervention yielding the largest reduction in predicted risk. A natural question arises: \emph{does greedy optimization provide any quality guarantee on the resulting intervention set?}

The answer is yes, if the objective function $f$, which measures the cumulative risk reduction achieved by a set of interventions, satisfies certain properties. For instance:

\begin{itemize}[leftmargin=*]
    \item \textbf{Submodular} ($f(A \cup \{x\}) - f(A) \geq f(B \cup \{x\}) - f(B)$ for $A \subseteq B$): Interventions exhibit \emph{diminishing marginal returns} -- the benefit of each additional intervention decreases as more interventions have already been applied. Under this condition, classical results guarantee that greedy achieves a $(1 - 1/e)$-approximation for cardinality-constrained maximization~\cite{nemhauser1978analysis}, and a $\frac{1}{2}$-approximation under knapsack constraints~\cite{sviridenko2004note}. Guarantees exist, even if $f$ is approximately submodular~\cite{das2018approximate,krause2014submodular}.
    
    \item \textbf{Approximately additive} ($f(A \cup B) \approx f(A) + f(B)$): Interventions act roughly independently. We formalize this via a relaxed additivity and prove that this also provides reasonable guarantees in Theorem~\ref{thm:greedy-bound}.
\end{itemize}

\begin{theorem}\label{thm:greedy-bound}
Let $f$ be a set function satisfying relaxed additivity with slack parameter $\alpha \in [0, 1)$:
\[
(1 - \alpha)(f(A) + f(v)) \leq f(A \cup v) \leq (1 + \alpha)(f(A) + f(v))
\]
for all sets $A$ and $v \notin A$, such that

(i) $f(\emptyset) = 0$ 

(ii) $f(A), f(v), f(A\cup v) \geq 0$

(iii) $f(A \cup v) \geq f(A)$.

Then,
the cumulative score $f_k$ obtained after $k$ greedy steps of Algorithm~\ref{alg:node_cf} satisfies:
\[
f_k \geq \gamma_k f_k^* \quad \text{where} \quad \gamma_k \geq \frac{1 - \alpha}{2(1 + \alpha)^{k - 1} - (1 + \alpha)}.
\]
\end{theorem}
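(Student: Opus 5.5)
The plan is an induction on $k$. At each step I would compare the optimal $(k+1)$-set with the optimal $k$-set through the upper half of relaxed additivity, and compare the greedy iterates through monotonicity (iii) and the lower half of relaxed additivity. Unrolling the resulting linear recursion should give a bound of the form $f_k^* \le (D_k/(1-\alpha))\, f_k$. The main risk is that the recursion I can actually derive may not reproduce exactly the denominator $2(1+\alpha)^{k-1}-(1+\alpha)$ in the statement.

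\emph{Base case and two facts about the greedy run.} For $k=1$ the greedy rule selects $g_1 \in \arg\max_v f(v)$, so $f_1 = f_1^*$ and $\gamma_1 = 1$. This matches the claimed bound, since the denominator at $k=1$ equals $2-(1+\alpha) = 1-\alpha$. Write $M := f(g_1) = \max_v f(v)$. Two facts will be used throughout. First, by (iii) the greedy values are nondecreasing, so $f_k \ge M$ for all $k\ge 1$. Second, by the lower half of relaxed additivity together with (ii), $f_k \ge (1-\alpha)\big(f_{k-1} + f(g_k)\big)$, so every greedy step adds at least a $(1-\alpha)$-discounted amount.

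\emph{Inductive step for the optimum.} Let $O_{k+1}$ be an optimal set of size $k+1$, and split it as $O' \cup \{o\}$ with $|O'| = k$. The upper half of relaxed additivity gives
\[
f_{k+1}^* = f(O' \cup o) \le (1+\alpha)\big(f(O') + f(o)\big) \le (1+\alpha)\big(f_k^* + M\big),
\]
using $f(O') \le f_k^*$ and $f(o) \le M$. Now substitute the inductive hypothesis $f_k^* \le \gamma_k^{-1} f_k$ and the bound $M \le f_k \le f_{k+1}$. This yields a recursion of the form $\gamma_{k+1}^{-1} \le (1+\alpha)\big(\gamma_k^{-1} + c\big)$, where the constant $c$ depends on whether the $(1-\alpha)$-discounted lower bound is used to relate $f(g_{k+1})$, and hence $M$, to the increment $f_{k+1}-f_k$. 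Unrolling gives a geometric sum in $(1+\alpha)$. I would then choose $c$ so that the solution is $\gamma_k^{-1} = D_k/(1-\alpha)$ with $D_k = 2(1+\alpha)^{k-1}-(1+\alpha)$. This choice requires $D_{k+1} - (1+\alpha)D_k = \alpha(1+\alpha)$.

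\emph{Main obstacle.} The hard part is this constant-tracking step. The crude substitution $M \le f_{k+1}$ produces the recursion $D_{k+1} = (1+\alpha)D_k + (1+\alpha)(1-\alpha)$, which is weaker than the stated bound. The difficulty is visible at $\alpha=0$: the target denominator is then $1$, meaning greedy is exactly optimal for additive $f$. That is true, because greedy picks the $k$ largest singletons, but proving it requires charging each optimal element $o_i$ to a distinct greedy element $g_i$ with $f(g_i) \ge f(o_i)$, not merely to $M$. So I would first redo the step with this exchange or charging argument, using $f(o_{k+1}) \le f(g_{k+1})$ after the best optimal elements have been matched. I would bound the cross terms coming from the $(1\pm\alpha)$ factors only at the end, where they produce the $\alpha(1+\alpha)$ increments. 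If this fails to close exactly, I would settle for the weaker recursion and state the resulting $\gamma_k$ explicitly.
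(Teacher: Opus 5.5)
Your framework is the right one, and you compute the target correctly: the stated bound is equivalent to $\gamma_{k+1}^{-1}\le(1+\alpha)\bigl(\gamma_k^{-1}+c\bigr)$ with $c=\alpha/(1-\alpha)$ and $\gamma_1=1$. But the proposal never produces that constant, so as written it does not prove the theorem. Bounding the removed optimal element by $M$ gives $c=1$, and your fallback proves only a weaker statement.

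The repair you sketch also fails once $f$ is non-additive. A matching $o_i\mapsto g_i$ with $f(g_i)\ge f(o_i)$ holds only in the additive case. The greedy rule of Algorithm~\ref{alg:node_cf} ranks candidates by $f(S_k\cup v)$, not by $f(v)$, so the singleton value of $g_{k+1}$ can be small. For instance, take $\alpha=0.2$, $f(a)=1$, $f(b)=f(d)=0.9$, $f(c)=0.5$, $f(\{a,b\})=f(\{a,d\})=1.52$, $f(\{a,c\})=1.8$, $f(\{b,d\})=2.16$. This extends to a valid $f$ on $\{a,b,c,d\}$. Greedy takes $a$ and then $c$, while the optimal pair is $\{b,d\}$, and both of its elements beat $f(c)$. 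Relating $M=f(g_1)$ to the increment $f_{k+1}-f_k$ is impossible as well, since $g_1\in S_k$.

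The missing idea is \emph{which} element to remove. Since $|O_{k+1}|=k+1>|S_k|$, choose $o\in O_{k+1}\setminus S_k$. Then $o$ is an admissible candidate at greedy step $k+1$, so greedy maximality and the lower half of relaxed additivity give
\[
f_{k+1}\ \ge\ f(S_k\cup\{o\})\ \ge\ (1-\alpha)\bigl(f_k+f(o)\bigr),\qquad\text{i.e.}\qquad f(o)\le\frac{f_{k+1}}{1-\alpha}-f_k .
\]
Substitute this into $f_{k+1}^*\le(1+\alpha)\bigl(f_k^*+f(o)\bigr)$ together with $f_k^*\le\gamma_k^{-1}f_k$. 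This yields
\[
f_{k+1}^*\le(1+\alpha)\Bigl(f_k\bigl(\gamma_k^{-1}-1\bigr)+\frac{f_{k+1}}{1-\alpha}\Bigr)\le(1+\alpha)\Bigl(\gamma_k^{-1}+\frac{\alpha}{1-\alpha}\Bigr)f_{k+1},
\]
where the last step uses $\gamma_k\le 1$ and $f_k\le f_{k+1}$. The $-f_k$ term, which the bound $f(o)\le M$ throws away, is exactly what turns $c=1$ into $c=\alpha/(1-\alpha)$. It also gives exactness at $\alpha=0$ without any exchange argument. This is the route of the paper's sketch, where $v^*_{k+1}$ stands for $f(o)$. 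Your unrolling then finishes the proof unchanged.
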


\begin{proof}[Proof Sketch]
Let $f_k = f(S_k)$ be the score after $k$ greedy steps, and $f_k^*$ the optimal value. Suppose $f_k \geq \gamma_k f_k^*$. From the relaxed additivity assumption, we have:
\[
f_{k+1} \geq (1 - \alpha)(f_k + v_{k+1}^*)
\quad \text{and} \quad
f_k^* \leq (1 + \alpha)(f_k + v_{k+1}^*).
\]
Using the lower bound for $f_{k+1}$, we can write:
\[
f_k^* \leq (1 + \alpha)\left( \frac{f_{k+1}}{1 - \alpha} \right).
\]

Substitute $f_k \geq \gamma_k f_k^*$ into the inequality:
\[
f_k^* \leq f_k \left( \frac{1 + \alpha}{\gamma_k} - 1 \right) + \frac{1 + \alpha}{1 - \alpha} f_{k+1}.
\]

Assuming monotonicity ($f_{k+1} \geq f_k$) and using $f^* > f_{k+1}$, we derive a recurrence relation for the approximation factor $\gamma_k$:
\[
\frac{1}{\gamma_{k+1}} \leq (1 + \alpha) \left( \frac{1}{\gamma_k} + \frac{\alpha}{1 - \alpha} \right).
\]

Define
$
t_k = \frac{1}{\gamma_k} + \frac{1 + \alpha}{1 - \alpha}
$.
Then above inequality can be rewritten as
\[
t_{k+1} \leq (1 + \alpha) t_k \Rightarrow t_k \leq t_1 (1 + \alpha)^{k - 1}.
\]

If $\gamma_1 = 1$, then:
\[
\frac{1}{\gamma_1} = 1 \Rightarrow t_1 = 1 + \frac{1 + \alpha}{1 - \alpha} = \frac{2}{1 - \alpha}.
\]

Thus:
\[
t_k \leq \frac{2(1 + \alpha)^{k - 1}}{1 - \alpha}
\Rightarrow
\frac{1}{\gamma_k} \leq \frac{2(1 + \alpha)^{k - 1} - (1 + \alpha)}{1 - \alpha}.
\]

%Taking the reciprocal, we obtain:
\[
\implies \gamma_k \geq \frac{1 - \alpha}{2(1 + \alpha)^{k - 1} - (1 + \alpha)},
\]

which is the desired approximation bound.
\end{proof}
% \begin{proof}[Proof Sketch]
% The proof establishes a lower bound on the approximation ratio $\gamma_k$ via induction on the greedy steps:

% \textbf{Relaxed Additivity:} We utilize the slack parameter $\alpha$ to relate the $(k+1)$-th greedy score to the optimal value, yielding the inequality $f_k^* \leq \frac{1+\alpha}{1-\alpha} f_{k+1}$.

% \textbf{Recurrence Relation:} Substituting the inductive hypothesis $f_k \geq \gamma_k f_k^*$ and assuming monotonicity, we derive a linear recurrence for the reciprocal ratio: $\frac{1}{\gamma_{k+1}} \leq (1 + \alpha) \left( \frac{1}{\gamma_k} + \frac{\alpha}{1 - \alpha} \right)$.

% \textbf{Closed-form:} By defining an auxiliary sequence $t_k = \frac{1}{\gamma_k} + \frac{1+\alpha}{1-\alpha}$ and setting the initial condition $\gamma_1 = 1$, we solve the geometric progression to find the lower bound for $\gamma_k$.

% The complete proof is available in the Appendix.
% \end{proof}

While the approximation decays rapidly with $k$, we show that a small $k$ is sufficient to flip the classification. Since our objective function $f$ is implicitly defined by a trained GNN, we conduct an empirical analysis to determine if approximate submodularity and/or additivity apply.

\paragraph{Empirical Validation.}
Following the methodology of~\cite{horel2016maximization}, we randomly sample pairs of disjoint intervention sets $A$ and $B$ across target nodes and measure two diagnostic ratios:

\begin{itemize}
    \item \textbf{Additivity Ratio:} $r_{\text{add}} = f(A \cup B) \,/\, (f(A) + f(B))$. A value of near 1 indicates approximate additivity.
    
    \item \textbf{Modularity Ratio:} $r_{\text{mod}} = \sum_{x \in B}[f(A \cup \{x\}) - f(A)] \,/\, [f(A \cup B) - f(A)]$. A value of $r_{\text{mod}} \leq 1$ confirms diminishing returns (submodularity); $r_{\text{mod}} \approx 1$ indicates near-perfect modularity. A value lower than $1$ indicates approximate submodularity as defined in~\cite{das2018approximate}.
\end{itemize}

\Cref{fig:guarantee} reports the distributions of both ratios over 5000 random trials. \textbf{(i) Approximate additivity.} The additivity ratios (\Cref{fig:guarantee}, left) are tightly concentrated around 1, with the vast majority falling within $[0.9, 1.1]$. This implies that the slack parameter $\alpha$ in Theorem~\ref{thm:greedy-bound} is very small in practice. Substituting a conservative estimate of $\alpha \approx 0.1$ into the bound of Theorem~\ref{thm:greedy-bound}, even after $k = 5$ greedy steps, the approximation ratio satisfies $\gamma_5 \geq 0.49$, indicating that the greedy solution retains at least 69\% of the optimal cumulative risk reduction.\textbf{(ii) Submodularity (and near-modularity).} The modularity ratios (\Cref{fig:guarantee}, right) are also concentrated around 1, with almost all instances of $r_{\text{mod}} > 0.5$. This results in $(1 - e^{-0.5})$-approximation guarantee ~\cite{das2018approximate,krause2014submodular}.

\begin{algorithm}[t]
\caption{Greedy Global Intervention Selection}
\label{alg:global_greedy}
\begin{algorithmic}[1]
  \Require Local counterfactual candidates $\mathcal{C}$, budget $B$, coverage function $F$
  \Ensure Global intervention strategy $S$
  
  \State $S \leftarrow \emptyset$
  \State $\textit{total\_cost} \leftarrow 0$
  \State $c_{\max} \leftarrow \arg\max_{c \in \mathcal{C}} F(\{c\}) $
  \While{$\mathcal{C} \setminus S \neq \emptyset$}
    \State $c^* \leftarrow
      \arg\max_{c \in \mathcal{C} \setminus S}
      \frac{F(S \cup \{c\}) - F(S)}{|c|}$
    \If{$\textit{total\_cost} + |c^*| \le B$}
      \State $S \leftarrow S \cup \{c^*\}$
      \State $\textit{total\_cost} \leftarrow \textit{total\_cost} + |c^*|$
    \Else
      \State Remove $c^*$ from further consideration
      \State $\mathcal{C} \leftarrow \mathcal{C} \setminus \{c^*\}$
    \EndIf
  \EndWhile
  \State $S \leftarrow \arg\max_{S' \in \{S, \{c_{\max}\} \}} F(S')$
  \State \Return $S$
\end{algorithmic}
\end{algorithm}
\subsection{Graph-level Intervention}
After generating node-level counterfactuals, we wish to identify a small set of counterfactuals that can be used as a potential intervention over the population.
We construct a global intervention strategy by selecting a subset of local counterfactual clauses to form a Disjunctive Normal Form (DNF) policy that maximizes the number of high-risk nodes being covered.

\subsubsection{Problem Formulation}

Let $P \subseteq V$ denote the set of nodes predicted as positive by the base model. For each node $v \in P$, we generate a local counterfactual clause, resulting in a candidate set $\mathcal{C} = \{c_1, \dots, c_{|P|}\}$. Our objective is to select a subset $S \subseteq \mathcal{C}$ that maximizes a global coverage function $F(S)$:

\begin{equation}
\max_{S \subseteq \mathcal{C}} 
\left| \{ v \in P : \exists c \in S,\ \mathrm{Covered}(v,c) \} \right|.
\end{equation}

subject to a total cost (or complexity) constraint:
\begin{equation}
\sum_{c \in S} |c| \leq B,
\end{equation}
where $|c|$ denotes the number of individual interventions contained in clause $c$, and $B$ is a predefined budget.

\subsubsection{Coverage and Compatibility}

A clause $c$ is said to \emph{cover} a node $v$ if applying $c$ to $v$ is sufficient to flip the predicted label of $v$. In practice, we adopt two ways of covering a node $\text{Covered}(v, c)$:

\begin{enumerate}[leftmargin=*]
    \item \textbf{Model-based Flip.} For a node feature rule, we consider a node covered if applying the change to its feature will result in the trained model to flip the label, 

    \item \textbf{Rule Compatibility.} Additionally, node $v$ is still considered covered if it satisfies its own minimal counterfactual requirements $c_v$ through $c$. Specifically, this requires that $c$ contains the same self-feature interventions as $c_v$, and that the neighborhood conditions imposed by $c$ are \emph{stricter than or equal to} those required by $c_v$. For example, a clause requiring $N(X) \geq 0.8$ is compatible with a node whose local requirement is $N(X) \geq 0.5$.
\end{enumerate}

\subsubsection{Monotonicity and Submodularity}

The global coverage function $F(S)$, satisfies two key properties:
\begin{itemize}[leftmargin=*]
    \item \textbf{Monotonicity:} For any $A \subseteq B \subseteq \mathcal{C}$, we have $F(A) \leq F(B)$. Adding additional intervention clauses never decreases the total number of covered nodes.
    \item \textbf{Submodularity:} The function $F(S)$ exhibits diminishing marginal returns. For any $A \subseteq B \subseteq \mathcal{C}$ and any clause $c \in \mathcal{C} \setminus B$,
    \begin{equation}
    F(A \cup \{c\}) - F(A) \geq F(B \cup \{c\}) - F(B).
    \end{equation}
    This is because the impact of an intervention, may overlap with impact of already applied interventions diminishing its coverage. 
\end{itemize}

\subsubsection{Greedy Strategy and Approximation Guarantee}

Maximizing a monotone submodular function under a knapsack constraint is NP-hard. However, it has been shown that a simple greedy algorithm (Algorithm~\ref{alg:global_greedy}) can achieve a 0.405-approximation~\cite{tang2021revisiting}. 
It iteratively chooses the clause $c^*$ with the highest marginal coverage gain per unit cost:
\begin{equation}
c^* = \arg\max_{c \in \mathcal{C} \setminus S}
\frac{F(S \cup \{c\}) - F(S)}{|c|}.
\end{equation}
It then augments the result by finding if there is a single clause that could achieve a better coverage while being within the constraint.

% \section{Theory}
% \label{sec:theory}

% \begin{table}[t]
%   \centering
%   \caption{Summary statistics of real-world and synthetic datasets at different scales. NF and NO denote Neighbor-Features and Neighbor-Only variants.}
%   \label{tab:dataset}
%   \begin{tabular}{l|ccc}
%     \toprule
%     Dataset & \#Nodes & \#Edges & \#Feature Dim.\\
%     \midrule
%     Military & 241  & 258  & 48 \\
%     Youth  & 131 & / & 39\\
%     Syn-NF (small)    &100&150/200/400  &4/10   \\
%     Syn-NF (mid)    & 250 & 300/500/1500 &4/10 \\
%     Syn-NF (large)    &500& 800/1500/3000 & 4/10  \\
%     Syn-NO (small)    & 100 & 150/200/400 & 6\\
%     Syn-NO (mid)    & 250 &300/500/1500  &6 \\
%     Syn-NO (large)    & 500 &800/1500/3000  &6 \\
%     \bottomrule
%   \end{tabular}
% \end{table}

\begin{table}[t]
    \centering
    \caption{Summary statistics of explanation and intervention datasets. 
    The benchmark datasets (top) are used for evaluating explanation quality with ground-truth motifs, 
    while the intervention datasets (bottom) are used for assessing counterfactual-based intervention design. 
    Synthetic datasets are generated at multiple scales to control graph size and feature dimensionality. 
    NF and NO denote the Neighbor-Feature and Neighbor-Only variants, respectively.}
  \label{tab:dataset}

  \renewcommand{\arraystretch}{1.2}
  \resizebox{\linewidth}{!}{
  \begin{tabular}{lcccccc}
    \toprule
    Dataset & \#Graph & \#Ave n & \#Ave e & \#Class & \#Feat & Task \\
    \midrule
    \multicolumn{7}{c}{\textbf{Explanation Benchmark Datasets}} \\
    \midrule
    BA-Shapes   & 1    & 700   & 4100 & 4 & --   & node \\
    Tree-Cycles & 1    & 871   & 1950 & 2 & --   & node \\
    Mutag$_0$   & 2301 & 31.74 & 32.54 & 2 & 14   & graph \\
    \midrule
    \multicolumn{7}{c}{\textbf{Intervention Datasets}} \\
    \midrule
    Military & 1 & 241  & 258  & 2 & 48 & node \\
    Youth    & 1 & 131  & --   & 2 & 39 & node \\
    Syn-NF (small) & 1 & 100 & 150/200/400  & 2 & 10 & node \\
    Syn-NF (mid)   & 1 & 250 & 300/500/1500 & 2 & 10 & node \\
    Syn-NF (large) & 1 & 500 & 800/1500/3000 & 2 & 10 & node \\
    Syn-NO (small) & 1 & 100 & 150/200/400 & 2 & 6 & node \\
    Syn-NO (mid)   & 1 & 250 & 300/500/1500 & 2 & 6 & node \\
    Syn-NO (large) & 1 & 500 & 800/1500/3000 & 2 & 6 & node \\
    \bottomrule
  \end{tabular}
  }
\end{table}

\begin{table*}[t]
  \caption{Evaluation of different explanation methods against ground-truth. 
  ``Pr\%'', ``\#exp'', and ``Time'' denote Precision, explanation size, and 
  average inference time per instance (in seconds). 
  For our method, \#exp is measured by the \textbf{Minimum Information Perturbation (MIP)}, 
  defined as the smallest number of modifications required to flip the model's prediction. 
  The best and second-best results for Pr\% and Time are highlighted in \textbf{bold} and \underline{underline}.}
  \label{tab:explain_results}

  \renewcommand{\arraystretch}{1.3}
  \centering
  \resizebox{\textwidth}{!}{%
  \begin{tabular}{lccccccccc}
  \toprule
  \multirow{2}{*}{\textbf{Models}} 
      & \multicolumn{3}{c}{\textbf{BA-Shapes}} 
      & \multicolumn{3}{c}{\textbf{Tree-Cycles}} 
      & \multicolumn{3}{c}{\textbf{Mutag}$_0$} \\
  \cmidrule(lr){2-4} \cmidrule(lr){5-7} \cmidrule(lr){8-10}
   & Pr\% & \#exp & Time(s) 
   & Pr\% & \#exp & Time(s) 
   & Pr\% & \#exp & Time(s) \\
  \midrule
  GNNExplainer 
      & 60.08 & 6.0 & 0.265 
      & 68.06 & 6.0 & 0.204 
      & 59.71 & 15.0 & 0.257 \\
  CF-GNNExplainer 
      & 67.19 & 5.79 & \underline{0.232} 
      & \underline{87.40} & 3.44 & \underline{0.186} 
      & \underline{66.09} & 7.72 & \underline{0.221} \\
  CF$^2$ 
      & \underline{73.15} & 6.21 & 3.77 
      & 84.92 & 5.81 & 1.81 
      & 65.28 & 14.95 & 2.80 \\
  CF-Greedy (Ours) 
      & \textbf{78.87} & 1.20 (MIP) & \textbf{0.037} 
      & \textbf{87.50} & 1.40 (MIP) & \textbf{0.033} 
      & \textbf{92.59} & 1.19 (MIP) & \textbf{0.079} \\
  \bottomrule
  \end{tabular}%
  }
\end{table*}

\begin{table*}[t]
  \caption{Comparison of different counterfactual explanation methods on real-world and synthetic datasets. AUCC denotes the area under the coverage curve (in \%), while Cov. represents the final coverage (in \%). Time reports the average runtime in seconds. The best, second-best, and worst results for DNF-based AUCC, Coverage, and Time are highlighted in \textbf{bold}, \underline{underline}, and \textcolor{red}{red}, respectively.}
  \label{tab:main_results}

  \renewcommand{\arraystretch}{1.3}
  \centering
  \resizebox{\textwidth}{!}{%
  \begin{tabular}{c c l *{18}{c}}
  \toprule[1.1pt]
  \multirow{3}{*}{} 
      && \multirow{5}{*}{Dataset} 
      & \multicolumn{18}{c}{Method} \\
  \cmidrule(lr){4-21}

      &&  & \multicolumn{6}{c}{CF} & \multicolumn{6}{c}{CF$^2$} & \multicolumn{6}{c}{Ours-Greedy} \\
  \cmidrule(lr){4-9} \cmidrule(lr){10-15} \cmidrule(lr){16-21}

      &&  
      & \multicolumn{3}{c}{\textbf{AUCC}} 
      & \multirow{2}{*}{\raisebox{0.6\normalbaselineskip}{\vrule width0.6pt}}
      & \multirow{2}{*}{\textbf{Cov.(DNF)}} 
      & \multirow{2}{*}{\textbf{Time}}
      & \multicolumn{3}{c}{\textbf{AUCC}} 
      & \multirow{2}{*}{\raisebox{0.6\normalbaselineskip}{\vrule width0.6pt}}
      & \multirow{2}{*}{\textbf{Cov.(DNF)}} 
      & \multirow{2}{*}{\textbf{Time}}
      & \multicolumn{3}{c}{\textbf{AUCC}} 
      & \multirow{2}{*}{\raisebox{0.6\normalbaselineskip}{\vrule width0.6pt}}
      & \multirow{2}{*}{\textbf{Cov.(DNF)}} 
      & \multirow{2}{*}{\textbf{Time}} \\
  \cline{4-6} \cline{10-12} \cline{16-18}

      &&  
      & Rand. & Freq. & DNF
      &  &  &
      & Rand. & Freq. & DNF
      &  &  &
      & Rand. & Freq. & DNF
      &  &  &  \\
  \midrule

  \multirow{2}{*}{\rotatebox[origin=c]{90}{\textbf{\Large \color{darkorange}Real}}}
  & ~
  & Military 
  & 0.348 & 0.360 & \textcolor{red}{0.344} & & \textcolor{red}{37.4\%} & \underline{129.090}
  & 0.605 & 0.680 & \underline{0.912} & & \underline{97.6\%} & \textcolor{red}{348.258}
  & 0.901 & 0.933 & \textbf{0.961} & & \textbf{100.0\%} & \textbf{8.533} \\
  & ~
  & Youth 
  & / & / & / & & / & /
  & 0.273 & 0.294 & \underline{0.664} & & \underline{75.96\%} & \textcolor{red}{443.399}
  & 0.303 & 0.341 & \textbf{0.739} & & \textbf{78.88\%} & \textbf{10.448} \\
  \midrule[0.7pt]

  \multirow{12}{*}{\rotatebox[origin=c]{90}{\textbf{\Large \color{RoyalBlue}Synthetic}}}
      & \multirow{6}{*}{\rotatebox[origin=c]{90}{\textbf{Neighbor-Feature}}}
      % Updated 2026-08-21: regenerated with Rule-Compatibility coverage restored
      % (G2I repo, N=3 seeds, budget=5, eps=0.1; times on i7-6700 4-core CPU)
      & N100-E150-D10
      & 0.454 & 0.462 & \textcolor{red}{0.275} & & \textcolor{red}{33.330\%} & \underline{44.105}
      & 0.416 & 0.649 & \underline{0.811} & & \underline{97.780\%} & \textcolor{red}{130.399}
      & 0.671 & 0.858 & \textbf{0.922} & & \textbf{100.000\%} & \textbf{2.473} \\

      &
      % Updated 2026-08-21: Ours-Greedy cells regenerated post coverage fix
      % (N=3 seeds, budget=5, eps=0.1; time on i7-6700); CF/CF2 cells unchanged
      & N100-E400-D10
      & 0.346 & 0.372 & \textcolor{red}{0.543} & & \textcolor{red}{71.670\%} & \underline{76.198}
      & 0.362 & 0.422 & \underline{0.754} & & \underline{93.190\%} & \textcolor{red}{131.494}
      & 0.776 & 0.839 & \textbf{0.919} & & \textbf{100.000\%} & \textbf{4.943} \\

      &
      & N250-E300-D10
      & 0.294 & 0.340 & \textcolor{red}{0.221} & & \textcolor{red}{29.510\%} & \underline{145.480}
      & 0.297 & 0.705 & \underline{0.857} & & \underline{97.280\%} & \textcolor{red}{296.557}
      & 0.575 & 0.795 & \textbf{0.928} & & \textbf{100.000\%} & \textbf{4.525} \\

      &
      & N250-E1500-D10
      & 0.405 & 0.370 & \textcolor{red}{0.410} & & \textcolor{red}{58.810\%} & \underline{138.069}
      & 0.372 & 0.797 & \underline{0.871} & & \underline{95.940\%} & \textcolor{red}{235.938}
      & 0.690 & 0.811 & \textbf{0.915} & & \textbf{99.410\%} & \textbf{10.971} \\

      &
      & N500-E800-D10
      & 0.368 & 0.397 & \textcolor{red}{0.275} & & \textcolor{red}{38.010\%} & \underline{322.562}
      & 0.368 & 0.772 & \underline{0.896} & & \underline{97.150\%} & \textcolor{red}{560.634}
      & 0.422 & 0.842 & \textbf{0.897} & & \textbf{99.780\%} & \textbf{10.672} \\

      &
      & N500-E3000-D10
      & 0.441 & 0.286 & \textcolor{red}{0.332} & & \textcolor{red}{47.870\%} & \underline{309.519}
      & 0.336 & 0.409 & \underline{0.531} & & \underline{72.150\%} & \textcolor{red}{528.249}
      & 0.501 & 0.794 & \textbf{0.877} & & \textbf{96.030\%} & \textbf{32.099} \\

  \cmidrule(lr){2-21}
      & \multirow{6}{*}{\rotatebox[origin=c]{90}{\textbf{Neighbor-Only}}}
      % Updated 2026-08-21: regenerated with Rule-Compatibility coverage restored
      % (N=3 seeds, budget=5, eps=0.1; times on i7-6700). CF ranks second on DNF here.
      & N100-E150-D6
      & 0.822 & 0.836 & \underline{0.841} & & \underline{90.770\%} & \underline{42.691}
      & 0.554 & 0.630 & \textcolor{red}{0.685} & & \textcolor{red}{83.420\%} & \textcolor{red}{90.813}
      & 0.800 & 0.882 & \textbf{0.921} & & \textbf{100.000\%} & \textbf{0.898} \\

      &
      % Updated 2026-08-21: Ours-Greedy cells regenerated post coverage fix
      % (N=3 seeds, budget=5, eps=0.1; time on i7-6700); CF/CF2 cells unchanged
      & N100-E400-D6
      & 0.194 & 0.164 & \textcolor{red}{0.247} & & \textcolor{red}{39.730\%} & \underline{67.441}
      & 0.164 & 0.377 & \underline{0.581} & & \underline{75.740\%} & \textcolor{red}{117.229}
      & 0.764 & 0.837 & \textbf{0.880} & & \textbf{92.830\%} & \textbf{2.563} \\

      &
      & N250-E300-D6
      & 0.294 & 0.350 & \textcolor{red}{0.334} & & \textcolor{red}{43.040\%} & \underline{172.282}
      & 0.645 & 0.865 & \textbf{0.920} & & \textbf{98.920\%} & \textcolor{red}{311.179}
      & 0.385 & 0.659 & \underline{0.848} & & \underline{97.480\%} & \textbf{4.001} \\

      &
      & N250-E1500-D6
      & 0.426 & 0.283 & \textcolor{red}{0.351} & & \textcolor{red}{49.710\%} & \underline{135.757}
      & 0.302 & 0.387 & \underline{0.665} & & \underline{84.350\%} & \textcolor{red}{232.319}
      & 0.514 & 0.678 & \textbf{0.793} & & \textbf{91.220\%} & \textbf{10.594} \\

      &
      & N500-E800-D6
      & 0.309 & 0.366 & \textcolor{red}{0.301} & & \textcolor{red}{37.680\%} & \underline{291.043}
      & 0.324 & 0.863 & \underline{0.932} & & \textbf{100.000\%} & \textcolor{red}{545.142}
      & 0.900 & 0.925 & \textbf{0.951} & & \textbf{100.000\%} & \textbf{5.900} \\

      &
      & N500-E3000-D6
      & 0.562 & 0.593 & \textcolor{red}{0.470} & & \textcolor{red}{64.700\%} & \underline{265.165}
      & 0.472 & 0.703 & \underline{0.877} & & \underline{99.300\%} & \textcolor{red}{441.836}
      & 0.815 & 0.858 & \textbf{0.942} & & \textbf{100.000\%} & \textbf{13.584} \\

  \bottomrule[1.1pt]
  \end{tabular}%
  } % end resizebox
\end{table*}

%% NOTE: tables/military_clauses_all and military_clauses_constraint are
%% \input from within texts/5_experiments, exactly as in the camera-ready.
%% Do not \input them again here or they will be typeset twice.
\section{Experiments}
\label{sec:experiment}

\subsection{Datasets} 
% We use both real-world and synthetic datasets to evaluate our method due to the limited availability of large-scale, publicly accessible suicide-related data, which prevents a comprehensive evaluation using only real-world datasets. 

We use both real-world and synthetic datasets to evaluate the explanation and intervention components of our framework. 
Together, these datasets enable a comprehensive assessment of explanation accuracy and counterfactual-based intervention generation across relational and non-relational settings. 
The benchmark datasets provide ground-truth motifs for quantitative explanation evaluation, while the intervention datasets examine intervention strategies under realistic and controlled synthetic settings. 
Summary statistics are reported in~\Cref{tab:dataset}.

\subsubsection{Datasets for Explanation Evaluation}
We evaluate our explanation method on three datasets with available ground-truth motifs: BA-Shapes, Tree-Cycles, and Mutag$_0$. 
BA-Shapes and Tree-Cycles\cite{ying2019gnnexplainer} are synthetic node classification benchmarks with human-designed structural motifs (house and cycle patterns, respectively). 
Mutag$_0$ is a curated sub-dataset of Mutag\cite{debnath_1991_structureactivity,cffgnn_explainer}, constructed to isolate benzene–NO$_2$ as the sole discriminative motif for graph classification. 
These datasets allow quantitative evaluation of explanation accuracy against known ground-truth structures.

\subsubsection{Datasets for Intervention Evaluation}
We evaluate intervention strategies on two real-world suicide risk networks and a family of synthetic graphs spanning varying sizes and densities.

\textbf{Military.}
A real-world peer network constructed from ego-network interviews with active-duty military personnel, comprising 241 nodes and 258 edges. The dataset captures social support and advice-seeking relationships alongside 27 individual-level attributes spanning demographics, mental health assessments (depression, PTSD, disability), substance use, deployment history, and unit cohesion, yielding 48 features after one-hot encoding of categorical variables. Demographic attributes (e.g., race, gender, age) are designated as immutable to constrain counterfactual generation. A suicidal ideation score is computed by summing direct suicidal intention responses. While the SBQ-R manual~\cite{gutierrez2001suicidal} suggests a cutoff of 7 or higher for high suicide risk, we adopt a much more conservative definition to support early identification and intervention: individuals are labeled \emph{at-risk} if either the summed suicidal-intention score or the direct suicidal-ideation item is 2 or greater, and \emph{not at-risk} otherwise.

\textbf{Youth.}
A real-world homeless youth dataset sourced from the YouthNet-SPY study, consisting of 131 individuals with 39 features and no explicit graph structure, enabling evaluation of counterfactual interventions in a non-relational setting.
Features span three domains: self-reported attributes (demographics, employment, substance use history), neighborhood-aggregate social network proportions (e.g., peer drug use exposure, sexual health discussion), and standardized psychological assessment scores (depression, loneliness). Suicide risk is derived from self-reported ideation and planning in the past 12 months: individuals reporting either suicidal ideation or a suicide plan are labeled \emph{at-risk}, and others \emph{not at-risk}.

\textbf{Synthetic.}
We construct a family of synthetic graph datasets using a hybrid generation mechanism that combines stochastic block model (SBM)~\cite{sbm} structure with homophily-based edge formation. Each node is assigned $m$ binary attributes drawn uniformly at random, and a normalized weight vector determines the influence of each attribute on weighted-distance-based edge formation, inducing controlled homophily effects. Risk labels are assigned through a controlled scoring mechanism. In the \emph{Neighbor-Only} variant, a node's risk depends solely on neighborhood composition, defined as the proportion of neighbors possessing a designated risk-inducing attribute. In the \emph{Neighbor-Feature} variant, risk follows a diathesis--stress formulation~\cite{monroe1991diathesis}, where a node becomes high-risk only when intrinsic vulnerability (determined by specific attribute values) interacts with exposure to high-risk neighbors. Nodes are labeled \emph{at-risk} if they fall within the top 17\% of the risk score distribution.

\subsection{Experimental Setup}

\noindent\textbf{Models.} Our explanation method and intervention mechanism both rely on an underlying predictive model with reliable classification performance. Across all experiments, we use a unified three-layer Graph Convolutional Network (GCN)~\cite{GCN} as the predictive backbone, trained to perform binary node-level risk classification (\textit{at-risk} vs.\ \textit{not at-risk}) over all nodes in the graph. We adopt an 80/20 stratified train--test split and optimize the models. The trained backbone achieves average test accuracies of $\mathbf{95.85\%}$, $\mathbf{98.99\%}$, and $\mathbf{96.40\%}$ on the Military, Youth, and synthetic datasets, providing a reliable basis for the subsequent counterfactual explanations and intervention generation.
% \noindent\textbf{Models:}
% Our explanation method and intervention mechanism both rely on an underlying well-trained predictive model. Across all experiments, we use a unified three-layer Graph Convolutional Network (GCN)~\cite{GCN} as the predictive backbone.

\noindent\textbf{Evaluation Metrics.} We use the following metrics to evaluate the node-level explanations.
\begin{itemize}[leftmargin=10pt,labelwidth=0pt]

\item \textbf{Pr\% (Precision).} Precision is the proportion of predicted important edges that belong to the ground-truth important subgraph. 

\item \textbf{\#exp (Explanation Size).}  
\#exp measures the size of the counterfactual explanation. For baseline methods, it is the number of perturbed edges (or features). For our method, \#exp is defined as the \emph{Minimum Information Perturbation (MIP)}, i.e., the minimum number of modifications needed to flip the prediction.

\item \textbf{Time(s).}  
Time reports the average inference time per instance (in seconds) required to generate a counterfactual explanation.
\end{itemize} 

We use the following metrics for graph-level intervention.
\begin{itemize}[leftmargin=10pt,labelwidth=0pt]
    \item \textbf{AUCC} (Area Under the Coverage Curve).
    AUCC measures intervention efficiency by computing the area under the coverage curve, which shows the number of flipped at-risk nodes under different intervention budgets. Higher AUCC indicates broader coverage with fewer interventions.

    \item \textbf{Coverage.}
    Coverage is defined as the number of at-risk nodes whose predicted labels are successfully flipped by a given intervention strategy under a fixed budget.
    
    \item \textbf{Time(s).}
    Time reports the time required to generate counterfactual explanations and intervention strategies (in seconds).

\end{itemize}

\noindent\textbf{Hyperparameters and Implementation Details.}
Experiments were conducted using a hybrid CPU-GPU setup. Model training and inference were performed on NVIDIA RTX 2080S GPUs, while explanation generation — including perturbation and greedy orchestration — was executed on a 64-core AMD EPYC 7513 CPU with 248 GB memory. To ensure a fair comparison, all competing methods were evaluated under the same GPU and CPU settings.
For CF, the regularization coefficient $\beta$ was selected from $\{0.01, 0.1, 1\}$. 
For $CF^2$, the edge and node mask thresholds were tuned within the range $[0.2, 0.8]$. 
The intervention budget was chosen from $\{5, 10, 15, 20\}$.  The code is publicly available~\footnote{\url{https://github.com/scc-usc/G2I}}.

To evaluate the effectiveness of our proposed DNF-based intervention strategy, we compare it against two commonly used baselines:
\textbf{(1) Random}, which selects intervention candidates uniformly at random;
and \textbf{(2) Frequency}, which prioritizes candidates based on their occurrence frequency in the generated counterfactual rules. Each experiment was repeated 10 times with different random seeds, and we report the mean $\pm$ standard deviation. For clarity, only the mean values are shown in the tables.

\begin{figure}[!ht]
    \centering
    \includegraphics[width=0.5\textwidth]{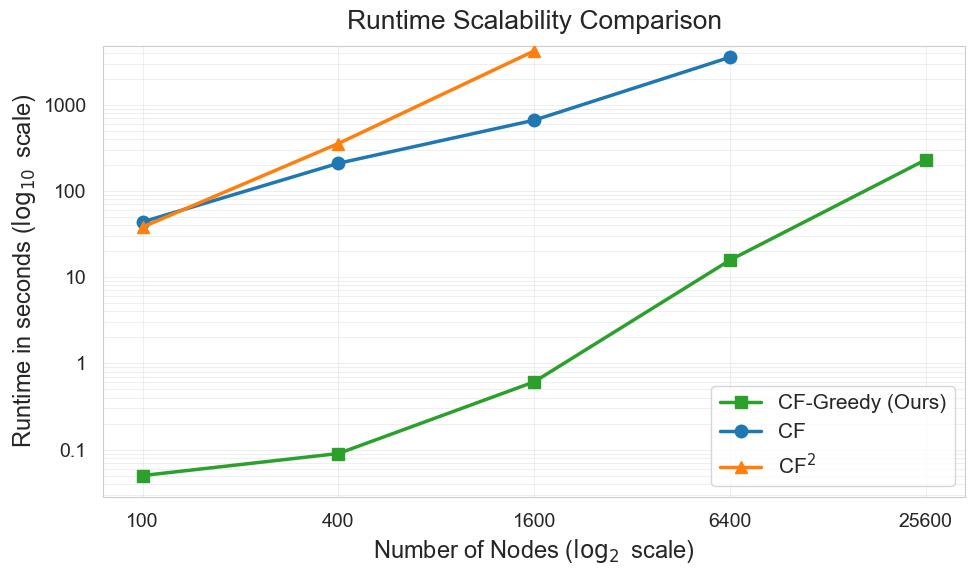}
    \caption{
            Runtime scalability of CF-Greedy, CF, and CF$^2$ on synthetic graphs with 100 to 25,600 nodes.
    }
    \Description{Line plot of runtime in seconds against the number of nodes, on logarithmic axes, for CF-Greedy, CF, and CF-squared on synthetic graphs ranging from 100 to 25,600 nodes. CF and CF-squared grow steeply and stop before the largest sizes, while CF-Greedy grows far more slowly and continues across the full range.}
    \label{fig:scalibility}
\end{figure}

% =====================================================
% TABLE 1: UNDERSTANDING (UNCONSTRAINED)
% =====================================================
% Regenerated 2026-08-22 with the Rule-Compatibility coverage criterion
% (Sec. 3.4.2) restored in compute_coverage; run: seed 42, budget 20,
% eps 0.1, 51 target nodes. Previous version backed up as
% military_clauses_all.tex.bak_20260822.

\begin{table*}[!ht]
\centering
\caption{Unconstrained CF-GREEDY results on the Military dataset (all features mutable)  -- may not be applicable for intervention design. Left: selected clauses with coverage and cost. Right: human-readable interpretation of the key risk and protective factors identified.}
\label{tab:understanding}
\begin{tabular}{p{0.4\textwidth} p{0.55\textwidth}}
\toprule
\textbf{Counterfactual Clauses} & \textbf{Interpretation} \\
\midrule
\small
\begin{enumerate}[nosep, leftmargin=*, label=\arabic*.]
    \item \texttt{race\_7} $\to$ 1 \textnormal{\small(race: other $\to$ multiracial)} \hfill \textcolor{gray}{+30}
    \item \texttt{soldier\_1} $\to$ 1 $\wedge$ \texttt{career\_intent\_1} $\to$ 1 \textnormal{\small(satisfaction: not satisfied $\to$ satisfied; career plan: other $\to$ stay until retirement)} \hfill \textcolor{gray}{+13}
    \item $\bar{x}_{\texttt{career\_intent\_3}} > 0.50$ \textnormal{\small(peer condition: $>$50\% of peers staying beyond obligation)} \hfill \textcolor{gray}{+2}
    \item \texttt{race\_5} $\to$ 1 \textnormal{\small(race: other $\to$ White)} \hfill \textcolor{gray}{+2}
    \item \texttt{race\_2} $\to$ 1 \textnormal{\small(race: other $\to$ Asian)} \hfill \textcolor{gray}{+1}
    \item \texttt{soldier\_1} $\to$ 1 \textnormal{\small(satisfaction: not satisfied $\to$ satisfied)} \hfill \textcolor{gray}{+1}
    \item \texttt{race\_3} $\to$ 1 \textnormal{\small(race: other $\to$ Black)} \hfill \textcolor{gray}{+1}
    \item \texttt{race\_2} $\to$ 1 $\wedge$ \texttt{soldier\_1} $\to$ 1 $\wedge$ $\bar{x}_{\texttt{career\_intent\_1}} > 0.33$ \textnormal{\small(race: other $\to$ Asian; satisfaction: not satisfied $\to$ satisfied; $>$33\% peers retiring)} \hfill \textcolor{gray}{+1}
\end{enumerate}
&
\small
\textbf{1. Racial identity as dominant correlate.}
Race is the single strongest predictor of the model's risk classification: changing racial category alone flips 59\% (30/51) of at-risk predictions (Clause 1), and clauses involving race (Clauses 1, 4--5, 7--8) together cover 69\% (35/51). Multiple racial categories appear across clauses, revealing that suicide risk predictions are strongly stratified along racial lines. This is not an actionable target, but may highlight known disparities across races~\cite{brenner2023trends}.

\vspace{4pt}
\textbf{2. Career commitment as a protective factor.}
Career intent---particularly long-term military commitment---combined with soldier satisfaction emerges as the second most influential factor, covering 14 additional individuals (Clauses 2, 6). These suggest that a clear sense of purpose and professional belonging are associated with lower predicted risk.

\vspace{4pt}
\textbf{3. Peer environment effects.}
Several clauses involve neighbor-mean thresholds for peers' career intent (Clauses 3, 8), indicating that the social composition of an individual's peer network influences the model's predictions beyond individual attributes.
\\
\midrule
\multicolumn{2}{l}{\small\textit{Total: 8 clauses, cost = 11, coverage = 51/51 (100\%)}} \\
\bottomrule
\end{tabular}
\end{table*}

% =====================================================
% TABLE 2: INTERVENTION (CONSTRAINED)
% =====================================================
% Regenerated 2026-08-22 with the Rule-Compatibility coverage criterion
% (Sec. 3.4.2) restored in compute_coverage; run: seed 42, budget 20,
% eps 0.1, 51 target nodes. Previous version backed up as
% military_clauses_constraint.tex.bak_20260822.
\begin{table*}[!ht]
\centering
\caption{Constrained CF-GREEDY results on the Military dataset (demographic features immutable). Left: selected clauses with coverage and cost. Right: human-readable intervention strategies derived from clauses.}
\label{tab:intervention}
\begin{tabular}{p{0.4\textwidth} p{0.55\textwidth}}
\toprule
\textbf{Counterfactual Clauses} & \textbf{Proposed Interventions} \\
\midrule
\small
\begin{enumerate}[nosep, leftmargin=*, label=\arabic*.]
    \item \texttt{career\_intent\_1} $\to$ 1 \textnormal{\small(career plan: other $\to$ stay until retirement)} \hfill \textcolor{gray}{+22}
    \item \texttt{soldier\_1} $\to$ 1 \textnormal{\small(satisfaction: not satisfied $\to$ satisfied)} \hfill \textcolor{gray}{+13}
    \item $\bar{x}_{\texttt{auditc\_1}} > 0.33$ \textnormal{\small(peer condition on peers' alcohol-use score)} \hfill \textcolor{gray}{+4}
    \item \texttt{career\_intent\_6} $\to$ 1 \textnormal{\small(career plan: other $\to$ definitely leaving)} \hfill \textcolor{gray}{+4}
    \item \texttt{soldier\_1} $\to$ 1 $\wedge$ \texttt{career\_intent\_1} $\to$ 1 \textnormal{\small(satisfaction: not satisfied $\to$ satisfied; career: other $\to$ stay until retirement)} \hfill \textcolor{gray}{+4}
    \item \texttt{career\_intent\_3} $\to$ 1 $\wedge$ \texttt{phq9\_total} $\to$ 0 \textnormal{\small(career: other $\to$ staying beyond obligation; depression: symptomatic $\to$ none)} \hfill \textcolor{gray}{+2}
    \item \texttt{soldier\_3} $\to$ 1 $\wedge$ \texttt{career\_intent\_4} $\to$ 1 \textnormal{\small(satisfaction: satisfied $\to$ dissatisfied; career: other $\to$ undecided)} \hfill \textcolor{gray}{+1}
    \item \texttt{married} $\to$ 1 $\wedge$ $\bar{x}_{\texttt{career\_intent\_1}} > 0.50$ \textnormal{\small(unmarried $\to$ married; $>$50\% peers plan to retire)} \hfill \textcolor{gray}{+1}
\end{enumerate}
&
\small
\textbf{Tier 1: Career development programs} (Clauses 1--2, 4--5; covers 84\%).
Career intent is the most actionable protective factor. Promoting long-term military commitment---through career counseling, re-enlistment decision support, and skills-to-civilian mapping---combined with soldier-satisfaction building addresses the large majority of at-risk individuals. This suggests implementing mandatory career development workshops targeting at-risk personnel.

\vspace{4pt}
\textbf{Tier 2: Clinical \& psychosocial support} (Clauses 6, 8; extends coverage to 90\%).
Reducing depression severity (PHQ-9 $\to$ 0) and relationship stability support (marriage) provide additional coverage. Recommended: stepped-care depression treatment and relationship enrichment programs (e.g., PREP for Strong Bonds).

\vspace{4pt}
\textbf{Tier 3: Network-level peer interventions} (Clauses 3, 7--8; covers 100\%).
The remaining individuals require interventions involving peer composition: embedding at-risk individuals among career-committed, stably married peers. Recommended: battle-buddy pairing with career-motivated soldiers, assignment to units with higher career commitment concentrations, and peer mentoring programs.
\\
\midrule
\multicolumn{2}{l}{\small\textit{Total: 8 clauses, cost = 12, coverage = 51/51 (100\%)}} \\
\bottomrule
\end{tabular}
\end{table*}

\subsection{Results}
\label{sec:results}
We evaluate the explanation and intervention stages separately.
\subsubsection{\textbf{Results for Explanation Evaluation}}
We follow the setup of $CF^2$~\cite{cffgnn_explainer} but use \emph{\textbf{Precision}} as the primary metric. 
Since our objective is to identify the minimal perturbation that flips a prediction, we focus on detecting truly critical components rather than maximizing coverage. 
Precision directly measures how many identified edges correspond to ground-truth important ones.

{\textbf{SOTA Performance on Explanation.}}
In \Cref{tab:explain_results}, we observe that our greedy method achieves a consistent improvement over state-of-the-art mask-training explanation methods across all datasets. Notably, the second-best performing methods are also counterfactual-based approaches, whose loss functions explicitly incorporate counterfactual objectives. This observation further supports the importance of counterfactual reasoning in identifying truly decisive graph components across both node- and graph-level classification tasks.

{\textbf{Minimum Explanation Size.}}
Unlike prior methods, which either fix explanation size ($K$ in GNNExplainer) or learn unconstrained soft masks (CF-GNNExplainer and $CF^2$), our objective directly targets the minimal perturbation necessary to change the prediction. Our minimal-perturbation principle provides a distinct perspective on explanation generation. 
Although ground-truth explanations exist in our three benchmarks, real-world settings typically lack explicit importance labels and predefined explanation sizes. 
In practice, our method yields the smallest explanation size (\#exp) across all datasets, leading to more concise and interpretable explanations with reduced computational overhead.

{\textbf{Shortest Runtime.}}
All methods were evaluated under the same hardware and experimental setup. Our framework achieves more than a one-order-of-magnitude speedup over existing methods, and in some cases approaches two orders of magnitude. Such efficiency is particularly critical in time-sensitive public decision-making settings, where rapid and reliable explanations are essential.

\subsubsection{\textbf{Results for Intervention Evaluation}}
For intervention generation, we compare three counterfactual explanation methods—CF~\cite{cfgnnexplainer}, $CF^2$~\cite{cffgnn_explainer}, and our proposed CF-Greedy—under three group-level intervention strategies in~\Cref{tab:main_results}: \emph{Random}, \emph{Frequency-based}, and \emph{DNF-based}. We evaluate intervention quality using AUCC, where higher values indicate more efficient coverage under a fixed budget, and also report Coverage and Time. For clarity, only DNF-based Coverage is shown in the main table.

{\textbf{Superior Coverage and AUCC.}}
As shown in Table~\ref{tab:main_results}, explanations generated by CF-Greedy consistently achieve substantially higher overall performance (measured by AUCC) compared to CF and $CF^2$. Notably, even when combined with the \emph{Random} and \emph{Frequency-based} intervention strategies, CF-Greedy maintains strong performance advantages.

In terms of final Coverage, CF-Greedy outperforms CF and $CF^2$ across nearly all datasets. The only exception is on the Neighbor-Only setting for N250-E300-D6, where performance is slightly lower than $CF^2$. Beyond this case, CF-Greedy consistently achieves superior coverage, with many datasets reaching 100\% coverage under the fixed budget—an outcome that CF and $CF^2$ rarely attain.

{\textbf{Superior Performance on Intervention.}}
Comparing the three intervention construction strategies across explanation methods reveals a clear structural pattern. For $CF^2$ and CF-Greedy, which generate joint feature–edge counterfactuals and thus enable heterogeneous intervention clauses, the DNF-based strategy consistently achieves the best performance. When candidate clauses contain diverse components, the DNF objective exploits combinatorial interactions and submodular-like gains for better budget allocation.

In contrast, for CF, which produces edge-only explanations, the Frequency-based strategy often performs competitively or even better. Under single-edge interventions, clauses are relatively homogeneous and highly overlapping, so selecting frequent edges already captures the dominant coverage structure, leaving limited room for combinatorial optimization. As a result, the advantage of DNF diminishes when intervention expressiveness is restricted.
Importantly, real-world intervention scenarios typically involve joint modifications of both individual attributes and relational structure. In such heterogeneous settings, the DNF framework is naturally better aligned with practical intervention design.

\textbf{Sensitivity of Mask-Based Methods.}
We tuned CF and $CF^2$ separately for each dataset, adjusting the regularization coefficient $\beta$ (CF) and mask thresholds (CF$^2$) controlling explanation sparsity and size. Optimal settings vary across datasets, indicating sensitivity to data characteristics. In contrast, our greedy method requires no dataset-specific tuning and outperforms the optimized baselines.

\textbf{Robustness to Network Scale.}
Network density is characterized by the edge-to-node ratio (E/N), reflecting average node connectivity. Real-world networks are typically sparse, with relatively small average degree compared to network size~\cite{newman2010networks}. Our synthetic graphs span low to moderately dense settings (E/N from 1 to 5), covering structures observed in social and organizational networks.

Under different density settings, we observe distinct behaviors across explanation methods. For CF, which relies on edge perturbations, performance improves as edge density increases, since denser graphs provide more opportunities for single-edge interventions. In contrast, $CF^2$ shows the opposite trend, with performance decreasing as density grows.
CF-Greedy remains stable across both sparse and dense settings. Although performance slightly declines in very dense graphs, the impact is marginal compared to CF and $CF^2$, indicating stronger robustness to structural complexity.

% \textbf{Robustness to Network Scale.}Network density can be characterized by the edge-to-node ratio (E/N), which reflects the average connectivity of nodes. Real-world networks are typically sparse, with relatively small average degree compared to network size~\cite{newman2010networks}. Our synthetic graphs span low to moderately dense settings (e.g., E/N ranging from 1 to 5), covering structural conditions commonly observed in social and organizational networks.

% Under these varying density settings, we observe distinct behaviors across explanation methods. For CF, which operates solely on edge perturbations, performance tends to improve as the number of edges increases, since denser structures provide more opportunities for effective single-edge interventions. In contrast, $CF^2$ exhibits the opposite trend, with performance decreasing as density grows.

% CF-Greedy remains stable across both sparse and dense settings. Although we observe a slight decline in very dense graphs, the impact is marginal compared to CF and $CF^2$, indicating stronger robustness to structural complexity.

%\subsubsection{Scalability}

\textbf{Shortest End-to-End Runtime.}
We measure end-to-end runtime from explanation generation to intervention construction. CF-Greedy consistently achieves the shortest runtime across all datasets, outperforming CF and $CF^2$. Although runtime grows with graph density, CF-Greedy maintains a clear efficiency advantage.

\subsubsection{\textbf{Scalability}}
We evaluate the scalability of our framework under different counterfactual explanation methods as the graph size increases from 100 to over 25,600 nodes. \Cref{fig:scalibility} reports the runtime of each strategy as a function of the number of nodes. We observe that optimization-based methods (CF\cite{cfgnnexplainer} and CF$^2$\cite{cffgnn_explainer}) become significantly slower as the graph size increases, due to repeated model evaluations during explanation generation. In contrast, the greedy strategy demonstrates significantly better scaling behavior, remaining efficient even as the graph size grows.

Beyond runtime, we also examine the practical feasibility of these strategies under fixed computational resources. We find that exhaustive and optimization-based methods (CF\cite{cfgnnexplainer} and CF$^2$\cite{cffgnn_explainer}) become infeasible beyond approximately 6,400 nodes, requiring more than 248GB of memory to execute. Under the same memory constraints, the greedy-based strategy within our framework scales to graphs with over 20,000 nodes.

%These results highlight an important trade-off within our framework: while more exhaustive strategies may provide stronger optimality guarantees, they do not scale to large graphs. In contrast, the greedy strategy offers a favorable balance between solution quality and computational efficiency, enabling explanation generation at scales relevant to real-world networked systems such as large social networks.
% {\color{red}Pick only synthetic. No need to show accuracy. Just increase the number of nodes/edges and show runtime (neighbor and feature, fix D=10, fix no of edges = 4*no. of nodes): \# nodes = 100, 400, 1600, 6400, 25600.

% Compare CF, CF2, Ours (you can skip CF and CF2 for large )
% }

\section{Discussion}

% {\color{red}Real-world intervention design

% The outputs of the greedy coverage model can be used to generate human readable hypothesis that a social scientist can understand and design studies for. We use LLMs to convert the DNF clause to a human readable hypothesis.

% Examples from our runs

% 1) When we don't restrict features to mutable ones, we generate hypotheses that help us understand the underlying process:

% \begin{quote}
% Individuals with siblings or those who have long-term military commitment (definitely staying until retirement) and siblings exhibit significantly lower suicide risk compared to those without these characteristics.    
% \end{quote}

% 2) When we restrict features that are mutable, we get hypotheses that can be used as interventions

% \begin{quote}
% Targeted interventions promoting long-term military commitment (e.g., counseling to align career intent toward definitely staying until retirement) or fostering relationship stability (e.g., relationship support for marriage) will significantly reduce suicide risk among high-risk military personnel compared to standard care.
% \end{quote}}

\subsection{Real-World Intervention Design}
\label{sec:intervention_design}

A key advantage of our counterfactual framework is that the generated DNF clauses can be directly translated into human-readable hypotheses.
We demonstrate this on the Military dataset by running CF-GREEDY under two settings:
(1)~an \emph{unconstrained} setting where all features are mutable, yielding explanatory insights about risk factors; and
(2)~a \emph{constrained} setting where demographic attributes (race, gender, sexuality, age, siblings, ACEs) are immutable, yielding actionable intervention strategies.
Both settings achieve 100\% coverage of all 51 at-risk individuals.

\textbf{Understanding Risk Factors.}
The unconstrained results (\Cref{tab:understanding}) reveal the features most predictive of the model's risk classification.
Race emerges as the single dominant factor, with a single racial category change flipping 59\% of at-risk predictions.
While race itself cannot and should not be an intervention target, this finding flags potential disparities in risk distribution that warrant further study --- for instance, whether certain racial groups face systematically higher exposure to stressors or reduced access to protective resources.
Career intent and soldier identity emerge as secondary but strong protective factors even in the unconstrained setting, suggesting these are genuine, robust correlates of lower risk rather than artifacts of demographic confounding.

\textbf{Designing Actionable Interventions.}
The constrained results (\Cref{tab:intervention}) demonstrate that, even when demographic features are locked, our framework identifies actionable interventions achieving full coverage at a modest cost increase (12 vs.\ 11).
The interventions naturally organize into a tiered structure: high-impact career programs alone cover 84\% of individuals, clinical interventions extend coverage to 90\%, and network-level strategies reach the remaining 10\%.
We note that some clauses point in opposite directions for different individuals (e.g., increased soldier satisfaction lowers predicted risk for some, while decreased satisfaction does for others), reflecting heterogeneous counterfactual pathways across subgroups rather than a single population-wide effect.
This tiered organization provides a practical deployment blueprint: programs can be prioritized by population-level impact, with career counseling as the universal first-line intervention and peer-network restructuring as a targeted complement.

Please note that none of the revealed hypotheses may be causal. The framework is meant to assist social scientists in understanding model behavior and in designing studies to test hypotheses or designing interventions.

\section{Conclusion}
\label{sec:conclusion}
We presented a theoretically grounded framework for graph explanation and intervention design based on discrete counterfactual reasoning, motivated by the goal of suicide prevention and simple explanation methods. By directly optimizing for the minimal perturbation that flips a prediction, our method reframes explanation generation as a principled discrete optimization problem rather than continuous mask approximation. For intervention design, we formulate clause selection as a DNF-based coverage objective, enabling efficient greedy approximation under budget constraints. Empirically, CF-Greedy achieves superior precision, minimal explanation size, strong robustness to network density, and substantial runtime improvements, highlighting the advantages of discrete combinatorial optimization over mask-based training approaches. By generating interpretable counterfactual explanations and translating them into actionable intervention strategies, our framework supports scalable and transparent deployment in real-world public health decision-making.

Future work includes formalizing tighter approximation guarantees for structured intervention costs and deploying selected promising interventions into the target population.

\begin{acks}
    This work was supported by the Army Research Office grant W911NF-23-1-0354.
\end{acks}

\clearpage

\bibliographystyle{ACM-Reference-Format}
\balance
\bibliography{refs}

\end{document}